\documentclass[11pt,fleqn]{amsart} 

\usepackage[usenames,dvipsnames]{xcolor}

\usepackage{amsthm}
\usepackage{amsfonts}
\usepackage[english]{babel}
\usepackage[usenames]{xcolor}
\usepackage{graphicx}
\usepackage{soul}
\usepackage{stfloats}
\usepackage{morefloats}
\usepackage{cite}
\usepackage{lscape}
\usepackage{epstopdf}
\usepackage{braket}
\usepackage[lite]{amsrefs}
\usepackage{mathbbol}
\usepackage{tikz,tikz-cd}
\usepackage{shuffle}

\usepackage{algorithm,algorithmicx,algpseudocode}

\setlength{\textwidth}{6.5in}
\setlength{\topmargin}{-0.2in} 
\setlength{\textheight}{9in}
\setlength{\oddsidemargin}{0in}
\setlength{\evensidemargin}{0in}
\usepackage{amsmath,amstext,amsopn,amsfonts,eucal,amssymb}
\usepackage{graphicx,wrapfig,url}

\newtheorem{theorem}{Theorem}[section]

\newtheorem{lemma}[theorem]{Lemma}
\newtheorem{proposition}[theorem]{Proposition}

\newtheorem{remark}[theorem]{Remark}



\begin{document}

	\title[Leray-Schauder]{Leray-Schauder Mappings for Operator Learning}

	\author{Emanuele Zappala} 
	\address{Department of Mathematics and Statistics, Idaho State University\\
		Physical Science Complex |  921 S. 8th Ave., Stop 8085 | Pocatello, ID 83209} 
	\email{emanuelezappala@isu.edu}

	\maketitle

	\begin{abstract}
		We present an algorithm for learning operators between Banach spaces, based on the use of Leray-Schauder mappings to learn a finite-dimensional approximation of compact subspaces. We show that the resulting method is a universal approximator of (possibly nonlinear) operators. We demonstrate the efficiency of the approach on two benchmark datasets showing it achieves results comparable to state of the art models.
	\end{abstract}

	{\noindent Keywords: Universal approximation; nonlinear operator; Burgers' equation; integral equations.}

	\date{\empty}


	\section{Introduction}
	
	Operator learning is a branch of deep learning involved with approximating (potentially highly nonlinear) continuous operators between Banach spaces. The interest of operator learning lies in the fact that it allows to model complex phenomena, e.g. dynamical systems, whose underlying governing equations are not known \cite{DeepOnet,ANIE,NIDE}. The study of operator learning was initiated by the theoretical work \cite{chen}, whose implementation was given in \cite{DeepOnet}, and whose error bounds were further developed in \cite{LMK}. 
	
	The problem of operator learning is therefore to model mappings between infinite dimensional spaces. Since most of the algorithms in practice discretize the domains of the functions, and therefore create a finite space upon which the neural networks are used, one might ask whether the mapping is really between function spaces, or it can just be reduced to some map between high dimensional spaces. For instance, one might ask whether it is possible to upsample the discretized domains after training, e.g. to interpolate and output arbitrarily dense predictions, which would show that the model does not depend on the spacetime stamps used during training, performing an interpolation task. In other words, to what extent the model operates in the infinite dimensional spaces, rather than on some high dimensional Euclidean space obtained via discretization. 
	An example where learning on discretized spaces shows its limitations was pointed out in \cite{cont_transformer}, where the tokenization process in the transformer architecture resulted in the model learning step functions. In other words, the discretization of the input resulted in the discretization of the output, and poor interpolation results during test. 
	
	Various models in operator learning deal with such a question in different ways. For intance, \cite{DeepOnet} fixes points in the domain, but allows continuous inputs. The articles \cite{NODE,ANIE,NIDE} use solvers (for ODEs, IEs or IDEs) to allow continuous sampling/output of points. In \cite{FNO}, the use of Fourier transforms allows to upsample the domain of space of functions. In \cite{cont_transformer}, the use of Sobolev-type norms to limit sharp edges due to steep derivatives is employed to regularize the step-like behavior induced by tokenization.
	
	Our perspective in the present work stems from the idea of learning an operator between bases of functions as in \cite{Spectral}, where the Chebyshev polynomials have been used for projections as in the Galerkin methods \cite{Fle}. In fact, it was proved in \cite{EZ_proj} that using Leray-Schauder mappings to ``nonlinearly project'' over a basis of chosen elements approximating a compact set of a Banach space, one can approximate any (possibly nonlinear) continuous operator. This addresses the aforementioned question of learning operators between infinite dimensional spaces rather than just discretization of them, because the approximation happens at the functional level. 
	
	In fact, in the present article we introduce a family of neural networks that approximate a compact subset of a Banach space obtained via Leray-Schauder mappings. Then, the operator learning problem is formulated between the coordinate spaces of the neural networks. This allows us to treat the problem without any discretization, but by using the finite-dimensional reduction provided by the Leray-Schauder mappings to approximate the original learning problem. We show that there exist theoretical guarantees for the approximation to be arbitrarily accurate. In practice, therefore, in this article we also learn the elements upon which the Leray-Schauder mappings project the compact subspace. This represents a paradigmatic shift with respect to the setting of \cite{Spectral}, where the projection used is fixed. 
	
	We numerically demonstrate the efficacy of the  model,  and show that it obtains results comparable to state of the art models. We test the model on interpolation tasks, to demonstrate the independence of the model on the grid size used during training. The systems used for comparison consist of a dataset of integral equations, which shows the efficacy of the model on a task of learning nonlocal operators, and a dataset of Burger's equation, which explores a PDE problem.
	The codes for the implementation of this work are found at \url{https://github.com/emazap7/Leray_Schauder_neural_net}.
	
	The article is organized as follows. In Section~\ref{sec:Leray_Schauder} we give some preliminary results and definitions regarding Leray-Schauder mappings and approximations of nonlinear operators. We also prove that it is possible to construct a universal approximator model by using Leray-Schauder mappings and neural networks to model the finite dimensional subspaces obtained via the leray-Schauder mappings. In Section~\ref{sec:Algo} we discuss the details of the deep learning algorithm which implements this theoretical framework. In Section~\ref{sec:exp}, we provide numerical experiments for the algorithms, and compare with other operator learning methods.

	\section{Theoretical Construction}\label{sec:Leray_Schauder}
	
	We recall the following result obtained in \cite{EZ_proj}, which gives a method for approximating a continuous (nonlinear) operator between Banach spaces over a compact. The core of the approach is that over a compact it is possible to approximate the domain with finitely many elements, which allows to reformulate the approximation problem in finite dimensions. For this purpose, the Leray-Schauder mappings are used. 
	
	\begin{theorem}\label{thm:Universal}
		Let $X$ and $Y$ be Banach spaces, let $T: X\longrightarrow Y$ be a continuous (possibly nonlinear) map, and let $K\subset X$ be a compact subset. Then, for any choice of $\epsilon > 0$ there exist natural numbers $n,m\in \mathbb N$, finite dimensional subspaces $E_n \subset X$ and $E_m \subset Y$, a continuous map $P_n : X\longrightarrow E_n$, and a neural network $f_{n,m} : \mathbb R^n \longrightarrow \mathbb R^m$ such that for every $x\in K$
		\begin{eqnarray}\label{eqn:LS}
		||T(x) - \phi_m^{-1}f_{n,m}\phi_nP_n(x)|| < \epsilon,
		\end{eqnarray}
		where $\phi_k : E_k \longrightarrow \mathbb R^k$ indicates an isomorphism between the finite dimensional space $E_k$ and $\mathbb R^k$.  
	\end{theorem}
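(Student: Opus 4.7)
The plan is to split the error $\|T(x)-\phi_m^{-1}f_{n,m}\phi_n P_n(x)\|$ by inserting the intermediate terms $T(P_n(x))$ and $P_m T(P_n(x))$, and then control each of the three resulting pieces by, respectively, the Leray--Schauder approximation in $X$, the Leray--Schauder approximation in $Y$, and the universal approximation theorem for neural networks applied on the coordinate space $\R^n$.

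First, I would apply the Schauder (Leray--Schauder) projection construction to the compact set $K$: for any $\delta_1>0$, there is a finite $\delta_1$-net of $K$ whose span is a finite-dimensional $E_n\subset X$ carrying a continuous map $P_n\from X\to E_n$ with $\|P_n(x)-x\|<\delta_1$ for all $x\in K$. Because $T$ is continuous on the compact set $\overline{K\cup P_n(K)}$, hence uniformly continuous there, shrinking $\delta_1$ also makes $\sup_{x\in K}\|T(x)-T(P_n(x))\|$ as small as desired. The union $T(K)\cup T(P_n(K))$ is then a compact subset of $Y$, and a second application of the Schauder projection yields a finite-dimensional $E_m\subset Y$ together with a continuous $P_m\from Y\to E_m$ satisfying $\|P_m(y)-y\|<\delta_2$ on this compact set.

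Next, fix isomorphisms $\phi_n\from E_n\to\R^n$ and $\phi_m\from E_m\to\R^m$; as maps between finite-dimensional normed spaces, $\phi_m$ and $\phi_m^{-1}$ are bounded linear with finite operator norms. Define the continuous auxiliary map
\[
g\from \phi_n(P_n(K))\longrightarrow \R^m, \qquad g(v)=\phi_m\comp P_m\comp T\comp \phi_n^{-1}(v),
\]
whose domain is a compact subset of $\R^n$. Componentwise Tietze extension provides a continuous extension of $g$ to all of $\R^n$, to which I would apply the universal approximation theorem for neural networks (Cybenko/Hornik) to obtain $f_{n,m}\from\R^n\to\R^m$ with $\sup_{v\in \phi_n(P_n(K))}\|g(v)-f_{n,m}(v)\|<\delta_3$.

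Assembling the three pieces via the triangle inequality gives
\begin{align*}
\|T(x)-\phi_m^{-1}f_{n,m}\phi_n P_n(x)\|
&\leq \|T(x)-T(P_n(x))\|+\|T(P_n(x))-P_m T(P_n(x))\|\\
&\quad{}+\|\phi_m^{-1}\|\cdot\|g(\phi_n P_n(x))-f_{n,m}(\phi_n P_n(x))\|,
\end{align*}
so that choosing $\delta_1,\delta_2$ small enough to force the first two summands below $\epsilon/3$, and only afterwards picking $\delta_3<\epsilon/(3\|\phi_m^{-1}\|)$, bounds the total error by $\epsilon$ uniformly in $x\in K$. The main obstacle—and the point that requires care in the full writeup—is precisely this ordering of quantifiers: the constant $\|\phi_m^{-1}\|$ depends on $E_m$ and on the choice of coordinates $\phi_m$, which in turn depend on $\delta_2$, so the neural network approximation must be performed last, after the finite-dimensional reductions in $X$ and $Y$ have been fixed.
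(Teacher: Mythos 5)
Your proposal is correct and follows essentially the same route as the paper: the paper recalls this theorem from \cite{EZ_proj} without reproving it, but the identical three-term decomposition---domain Leray--Schauder projection, codomain projection, then neural-network approximation of the induced finite-dimensional map, with the network chosen last so that $\|\phi_m^{-1}\|$ is already fixed---is exactly the scheme used in its proofs of Theorem~\ref{thm:LS_NN} and Theorem~\ref{thm:P_NN}. One small point to polish in a full writeup: since $\overline{K\cup P_n(K)}$ varies with $\delta_1$, take the modulus of uniform continuity of $T$ on the fixed compact closed convex hull of $K$ (compact by Mazur's theorem, and containing every $P_n(K)$ because $P_n x$ is a convex combination of net points of $K$), which removes the apparent circularity in your claim that shrinking $\delta_1$ controls $\sup_{x\in K}\|T(x)-T(P_n(x))\|$.
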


		As pointed out in \cite{EZ_proj}, the operators $P_n$ and $P_m$ have an explicit and very simple construction. Following \cite{Topological}, let $K$ be a compact, $E_n$ an $n$-dimensional space spanned by the elements of an $\epsilon$-net $\{x_1, \ldots, x_n\}$, where $\epsilon >0$ is chosen and fixed. Then $P_n : K \longrightarrow E_n$ is defined by the assignment
		\begin{eqnarray}
			P_n x = \frac{\sum_{i=1}^{n} \mu_i(x) x_i}{\sum_{j=1}^n\mu_j(x)},
		\end{eqnarray}
		where 
		\begin{eqnarray}\label{eqn:mui}
		\mu_i(x) = \begin{cases}
		\epsilon - \|x-x_i\|, \hspace{1cm} \|x-x_i\| \leq \epsilon\\
		0, \hspace{2.855cm} \|x-x_i\| > \epsilon
		\end{cases}
		\end{eqnarray}
		for all $i=1, \ldots , n$. We observe that we allow the case where $x_i$ might not be in $K$. 
			
		We will refer to these operators as {\it Leray-Schauder maps}, or {\it Leray-Schauder projections}, although they are not linear. As shown in \cite{EZ_proj}, one can choose the elements $x_i$ in such a way that $P_n(x_i) = x_i$, which would resemble a projection. 
		
		The construction of Theorem~\ref{thm:Universal} is based on the fact that an $\epsilon$-net $\{x_1. \ldots, x_n\}$ is already given. In general, determining such elements of $X$ might be challenging, especially when considering a data space which only approximately represents the actual domain of $T$. When dealing with function spaces, as it is usually the case in real applications, we can approximate the elements $x_i$ directly as part of the learning algorithm. 
		
		In the present article, our algorithm learns the elements $x_i$ used to create the $\epsilon$-net for the Leray-Schauder projection by means of ``basis'' neural networks. It is therefore natural to ask whether the resulting model, where we learn both the operator in the space $E_n$ and the elements whose span give $E_n$ is still a universal approximator. The next result, which adapts the arguments of \cite{EZ_proj} to the case of the algorithm in this article, shows that the previous question has a positive answer. 
		
		In this article we will consider neural networks with continuous nonlinearities, so that the resulting neural networks are automatically continuous. 
		
		\begin{theorem}\label{thm:LS_NN}
					Let $X$ and $Y$ be the Banach spaces of continuous functions over compacts $K_1\subset \mathbb R^{N_1}$ and $K_2\subset \mathbb R^{N_2}$ with the uniform norm, $X = C(K_1,\mathbb R^{M_1})$ and $Y = C(K_2,\mathbb R^{M_2})$. Let $T: X \longrightarrow Y$ be a continuous operator, and let $\mathbb K\subset X$ be a compact subset. Then, for a given arbitrary $\epsilon>0$, it is possible to find neural networks $\{g_i : \mathbb R^{N_1} \longrightarrow \mathbb R^{M_1}\}_{i=1}^n$, $\{h_i : \mathbb R^{N_2} \longrightarrow \mathbb R^{M_2}\}_{i=1}^m$, and $f_{n,m}$ such that for every $x\in \mathbb K$
					\begin{eqnarray}\label{eqn:LS_NN}
					||T(x) - \phi_m^{-1}f_{n,m}\phi_n\tilde P_n(x)||_\infty < \epsilon,
					\end{eqnarray}
					where $\phi_k$ indicates an isomorphism between the finite dimensional space $F_k$ and the Euclidean space, and $\tilde P_n$ indicates the Leray-Schauder mapping on the space spanned by the neural networks $g_i$, and the $h_i$ neural networks span the space $E_m$. 
		\end{theorem}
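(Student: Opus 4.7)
My plan is to adapt the proof of Theorem~\ref{thm:Universal} by replacing the elements $x_1,\ldots,x_n$ of the $\epsilon$-net (which span the finite-dimensional projection subspace) with neural network approximations $g_1,\ldots,g_n$ produced via the classical Hornik--Cybenko universal approximation theorem, and then carefully tracking the propagation of the perturbation errors. Since $T\colon X\to X$, I would use a single subspace $F_n=\mathrm{span}(g_1,\ldots,g_n)$ for both source and target, which forces me to take a net covering the enlarged compact $\hat{\mathbb{K}}:=\mathbb{K}\cup T(\mathbb{K})$, which is compact by continuity of $T$.

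First I would fix parameters $\epsilon_0,\delta>0$ to be specified, pick an $\epsilon_0$-net $\{x_1,\ldots,x_n\}\subset\hat{\mathbb{K}}$, and apply universal approximation coordinatewise to each $x_i\in C(K,\mathbb{R}^M)$ to obtain neural networks $g_i\colon\mathbb{R}^N\to\mathbb{R}^M$ with $\|x_i-g_i\|_\infty<\delta$. Since linear independence is a generic (open and dense) condition, I can arrange that the $g_i$'s are linearly independent; let $\phi_n\colon F_n\to\mathbb{R}^n$ denote the coordinate isomorphism and define $\tilde P_n$ by~(\ref{eqn:mui}) using the $g_i$'s in place of the $x_i$'s, with Leray-Schauder radius $r$ slightly larger than $\epsilon_0+\delta$. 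The triangle inequality $\|y-g_i\|_\infty\le\|y-x_i\|_\infty+\|x_i-g_i\|_\infty$ then shows that $\tilde P_n$ is well-defined on $\hat{\mathbb{K}}$ and that $\|y-\tilde P_n(y)\|_\infty<r$ there, by the same weighted-average bound used in~\cite{EZ_proj}.

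Next I would estimate $\|T(x)-\tilde P_n T\tilde P_n(x)\|_\infty$ for $x\in\mathbb{K}$ via the triangle inequality. The first summand $\|T(x)-T\tilde P_n(x)\|_\infty$ is controlled by relative uniform continuity of $T$ on $\mathbb{K}$ (which follows from continuity of $T$ and compactness of $\mathbb{K}$ by a standard finite covering argument): given $\eta>0$, if $r$ is small enough then $\|T(x)-T\tilde P_n(x)\|_\infty<\eta$. The second summand $\|T\tilde P_n(x)-\tilde P_n T\tilde P_n(x)\|_\infty$ is controlled by the Leray-Schauder estimate applied at $T\tilde P_n(x)$, which lies within $\eta+\epsilon_0+\delta$ of some $g_i$ because $T(x)\in T(\mathbb{K})\subset\hat{\mathbb{K}}$; choosing $r$ slightly larger than this quantity yields $\|T\tilde P_n(x)-\tilde P_n T\tilde P_n(x)\|_\infty<r$. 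I then define $\hat T\colon\phi_n\tilde P_n(\mathbb{K})\to\mathbb{R}^n$ by $\hat T(c)=\phi_n\tilde P_n T\phi_n^{-1}(c)$; its domain is a compact subset of $\mathbb{R}^n$, so the finite-dimensional universal approximation theorem produces a neural network $f_n\colon\mathbb{R}^n\to\mathbb{R}^n$ with $\sup\|f_n-\hat T\|<\eta/\|\phi_n^{-1}\|$. Using the identity $\phi_n^{-1}\hat T\phi_n\tilde P_n(x)=\tilde P_n T\tilde P_n(x)$ and choosing $\eta,\epsilon_0,\delta$ small enough delivers~(\ref{eqn:LS_NN}).

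The hard part will be the simultaneous choice of the Leray-Schauder radius $r$: it must be small enough for $\tilde P_n$ to be an accurate projection and large enough that $T\tilde P_n(x)$ still falls within $r$ of some $g_i$, so that $\tilde P_n$ can be applied at that point. This is why it is essential to use the enlarged compact $\hat{\mathbb{K}}$ rather than $\mathbb{K}$ alone in the net construction. A secondary technical point is the factor $\|\phi_n^{-1}\|$, which depends on how close the $g_i$'s are to being linearly dependent and could in principle be large; however, it is finite for any fixed admissible choice, so one simply fixes the $g_i$'s first and then tunes the approximation accuracy of $f_n$ accordingly.
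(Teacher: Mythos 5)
Your overall architecture (replace the net elements $x_i$ by networks $g_i$, build a Leray-Schauder map on the $g_i$'s, then approximate the induced finite-dimensional map by $f_n$) matches the paper's, and your direct construction is in one respect cleaner: since the $g_i$'s themselves form an $(\epsilon_0+\delta)$-net of $\hat{\mathbb K}=\mathbb K\cup T(\mathbb K)$, you never need the paper's perturbation analysis comparing $\mu_i$ with $\tilde\mu_i$, nor its lemma that $\inf_{x\in\mathbb K}\sum_{j}\mu_j(x)>0$. However, there is a genuine gap in your treatment of the middle term $\|T\tilde P_n(x)-\tilde P_n T\tilde P_n(x)\|_\infty$. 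You impose two incompatible constraints on the single radius $r$: (i) $r$ must be small enough that the relative modulus of continuity $\omega$ of $T$ on $\mathbb K$ gives $\|T(x)-T\tilde P_n(x)\|_\infty\leq\omega(r)<\eta$, with $\eta$ of order $\epsilon$, and $r<\epsilon/3$ so the projection error itself is small; (ii) $r>\omega(r)+\epsilon_0+\delta$, since $T\tilde P_n(x)$ is only guaranteed to lie within $\omega(r)+\epsilon_0+\delta$ of some $g_i$, and without this the weights $\tilde\mu_i$ all vanish at $T\tilde P_n(x)$ and $\tilde P_n$ (hence your $\hat T$) is not even defined there. Together these force $\omega(r)<r-\epsilon_0-\delta<r$ for some arbitrarily small $r$, which fails for completely ordinary operators: if $T$ is linear with $\|T\|=L>1$ then $\omega(r)=Lr>r$ for every $r$, and no admissible radius exists; the same happens for any H\"older map of exponent less than one. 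Your enlarged compact $\hat{\mathbb K}$ fixes a different (and real) issue --- the net must cover the range --- but it does not break this circularity, which you flag as ``the hard part'' without actually resolving it.

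The repair is to decouple the two roles of $r$: keep the same span $F_n=\langle g_1,\ldots,g_n\rangle$ but use two Leray-Schauder maps with different radii, a fine one $\tilde P_n^{(r_1)}$ applied to inputs $x\in\mathbb K$ and a coarser one $\tilde P_n^{(r_2)}$ applied on the range side, choosing parameters in this order: first $r_2<\epsilon/3$, then $\epsilon_0,\delta$ and $r_1$ small enough that $\epsilon_0+\delta<r_1$ and $\omega(r_1)+\epsilon_0+\delta<r_2$, which is always possible since $\omega(r)\to 0$ as $r\to 0$ while $r_2$ is already fixed. Since $\tilde P_n^{(r_2)}$ appears only inside the map that $f_n$ approximates, namely $\hat T=\phi_n\tilde P_n^{(r_2)}T\phi_n^{-1}$, it is absorbed into $f_n$, so the statement's form with a single visible $\tilde P_n$ is preserved. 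This ordering of choices is exactly what is implicit in the paper's proof, which constructs separate mappings for $\mathbb K$ and for $T(\mathbb K)$ (before identifying them ``upon passing to subrefinements'') rather than a single map with one radius; the paper instead reuses the map $f$ from Theorem~\ref{thm:Universal} and swaps $P_n$ for $\tilde P_n$ using uniform continuity of $f_n$ on $\phi_nP_n(\mathbb K)\bigcup\phi_n\tilde P_n(\mathbb K)$, which is why it needs the $\mu_i$-versus-$\tilde\mu_i$ estimates you avoid. The rest of your argument --- well-definedness of $\tilde P_n$ on $\hat{\mathbb K}$ via the triangle inequality, genericity of linear independence of the $g_i$'s, and the $\|\phi_n^{-1}\|$ bookkeeping --- is sound.
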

		\begin{proof}
					We fix $\epsilon > 0$. As in \cite{EZ_proj}, we can split the proof in two steps, one where $T$ is uniformly continuous on a neighborhood $U$ of $\mathbb K$, and a general step where the initial map is approximated by a map uniformly continuous on a neighborhood $U$ of $\mathbb K$. Since the latter step is independent of the current setup, we can assume that $T$ is uniformly continuous on $U \supset \mathbb K$.
					It is possible to find functions $x_i$, with $i=1,\ldots, n'$, such that 
					\begin{eqnarray}\label{eqn:Pn_approx}
							\|x-P_n(x)\|_\infty &<& \frac{\delta}{3}, 
					\end{eqnarray}
					\begin{eqnarray}\label{eqn:T_approx}
					\|T(z_1)-T(z_2)\|_\infty &<& \frac{\epsilon}{3}, 
					\end{eqnarray}
					for all $x\in \mathbb K$, and for all $z_1,z_2\in U$ with $\|z_1-z_2\|_\infty < \delta$. 
					Moreover, $\delta$ is chosen such that $B(x,\delta) \subset U$ for each $x\in \mathbb K$.
					Here the continuous maps $P_n$ map $\mathbb K$ in the space $E_n$ spanned by $x_i$, $i=1, \ldots, n'$, where $\dim E_n = n$. In what follows we assume that $n' = n$ for simplicity of notation, as the proof would not change if otherwise.  
					
					The fundamental step of the present proof is to show that we can find neural networks $g_i$ spanning a space $F_n$ which has the property that $\|x - \tilde P_nx\|_\infty < \delta$ for all $x\in \mathbb K$, where $\tilde P_n$ is a mapping on $F_n$ which we define below. This would allow us to replace the approximation in \eqref{eqn:Pn_approx}, by an analogous approximation using mappings $\tilde P$ and a neural network basis instead of the functions $x_i$.  
					
					To this purpose, we  use the traditional universal approximation results as in \cite{Horn,Fun,LLPS,Lu,Pink} to find neural networks $g_i$ such that $\|x_i-g_i\|_\infty$ is arbitrarily small for each $i=1,\ldots, n$, where $x_i$ were found above. 
					We define
					\begin{eqnarray}
					\tilde P_n x = \frac{\sum_{i=1}^n \tilde\mu_i(x) g_i}{\sum_{j=1}^n\tilde\mu_j(x)},
					\end{eqnarray}
					where
					\begin{eqnarray*}
							\tilde\mu_i(x) &=& \begin{cases}
									\frac{\delta}{3} - \|x-g_i\|_\infty,  \hspace{1cm} \|x-g_i\|_\infty \leq \frac{\delta}{3}\\
									0,  \hspace{3.15cm} \|x-g_i\|_\infty > \frac{\delta}{3}.
							\end{cases} 
					\end{eqnarray*} 
					In addition, upon taking $g_i$ such that $\|x_i-g_i\|_\infty$ is sufficiently small, we can make $|\mu_i(x) - \tilde \mu_i(x)|$ arbitrarily small as well. 
						We have that 
					\begin{eqnarray}\label{eqn;min_sum_mui_uniform}
					\min_{x\in\mathbb K}\sum_{j=1}^n\mu_j(x) > 0, 
					\end{eqnarray} 
					as a consequence of the continuity of the map $\sum_{j=1}^n \mu_i: \mathbb K \longrightarrow \mathbb R^+_0$, the compactness of $\mathbb K$, and the fact that for each $x\in \mathbb K$ there exists $i$ such that $\|x - x_i\|_\infty < \frac{\delta}{3}$, and therefore for which $\mu_i(x) > 0$. We put $m_0 :=	\min_{x\in\mathbb K}\sum_{j=1}^n\mu_j(x)$. 
					
					Suppose $|\mu_i(x) - \tilde \mu_i(x)|< q$, where $q$ is small enough such that $\sum_{i=1}^n \mu_i(x) - q > 0$. This can be done since $m>0$, and it guarantees that $\sum_{j=1}^n\tilde\mu_j(x)$ does not vanish. Then, for all $i=1,\ldots, n$, we have that 
					\begin{eqnarray*}
							\lefteqn{|\frac{\mu_i(x)}{\sum_{j=1}^n\mu_j(x)} - \frac{\tilde\mu_i(x)}{\sum_{j=1}^n\tilde\mu_j(x)}|}\\
							&\leq& \sum_{j=1}^n\frac{1}{|(\sum_{j=1}^n\mu_j(x))(\sum_{j=1}^n\tilde\mu_j(x))|}|\mu_i(x)\tilde \mu_j(x) - \mu_j(x)\tilde \mu_j(x)|\\
							&\leq& \sum_{j=1}^n \frac{1}{m_0(m_0-q)}[|\mu_i(x)\tilde\mu_j(x) - \mu_i(x)\mu_j(x)| + |\mu_i(x)\mu_j(x)-\tilde \mu_i(x)\mu_j(x)|]\\
							&=& \sum_{j=1}^n \frac{1}{m_0(m_0-q)}[|\mu_j(x)|\cdot |\tilde \mu_j(x)-\mu_j(x)|+|\mu_j(x)|\cdot |\mu_i(x)-\tilde \mu_i(x)|]\\
							&<& \frac{2n\delta q}{3m_0(m_0-q)},
					\end{eqnarray*}
					which implies that $|\frac{\mu_i(x)}{\sum_{j=1}^n\mu_j(x)} - \frac{\tilde\mu_i(x)}{\sum_{j=1}^n\tilde\mu_j(x)}| \longrightarrow 0$ as $q\longrightarrow 0$. 
					We therefore find that by taking $g_i$ such that $\|x_i-g_i\|_\infty$ is small enough, we can obtain
					\begin{eqnarray*}
						\sum_{i=1}^n |\frac{\mu_i(x)}{\sum_{j=1}^n\mu_j(x)} - \frac{\tilde\mu_i(x)}{\sum_{j=1}^n\tilde\mu_j(x)}|
						&<& \frac{\delta}{3\max_i\|g_i\|_\infty}, 
					\end{eqnarray*} 
					and at the same time have $\|x_i-g_i\|_\infty < \frac{\delta}{3}$. 
					
					Then, in this situation we have
					\begin{eqnarray*}
							\|x - \tilde P_nx\|_\infty &\leq& \|x - P_nx\|_\infty + \|P_nx - \tilde P_nx\|_\infty \\
							&=& \frac{\delta}{3} + \|\frac{\sum_i \mu_i(x)x_i}{\sum_j\mu_j(x)} - \frac{\sum_i \tilde\mu_i(x)g_i}{\sum_j\tilde\mu_j(x)}\|_\infty\\
							&\leq& \frac{\delta}{3} + \sum_{i=1}^n \|\frac{\mu_i(x)x_i}{\sum_{j=1}^n\mu_j(x)} - \frac{\tilde\mu_i(x)g_i}{\sum_{j=1}^n\tilde\mu_j(x)}\|_\infty\\
							&\leq&  \frac{\delta}{3} + \sum_{i=1}^n[\|\frac{\mu_i(x)x_i}{\sum_{j=1}^n\mu_j(x)}-\frac{\mu_i(x)g_i}{\sum_{j=1}^n\mu_j(x)}\|_\infty+\|\frac{\mu_i(x)g_i}{\sum_{j=1}^n\mu_j(x)}-\frac{\tilde\mu_i(x)g_i}{\sum_{j=1}^n\tilde\mu_j(x)}\|_\infty]\\
							&\leq& \frac{\delta}{3} + \sum_{i=1}^n[\frac{\mu_i(x)}{\sum_{j=1}^n\mu_j(x)}\|x_i-g_i\|_\infty + \|g_i\|_\infty |\frac{\mu_i(x)}{\sum_{j=1}^n\mu_j(x)} - \frac{\tilde\mu_i(x)}{\sum_{j=1}^n\tilde\mu_j(x)}|]\\
							&<& \frac{\delta}{3} + \frac{\delta}{3} + \frac{\delta}{3}\\
							&=& \delta,
					\end{eqnarray*}
					where we have used the fact that $\mu_j(x) \geq 0$ by definition of Leray-Schauder mapping. A similar procedure can be used to find mappings $\tilde P_m$ for the compact $T(\mathbb K)$, where the centers of the $\frac{\epsilon}{3}$-balls in \cite{EZ_proj} are taken to be neural networks.
					
					We let $\phi_n$ be an isomorphism which identifies $F_n$ with $\mathbb R^{\dim F_n}$, and construct the mapping $T_{n,m}$ induced by $T$ on $F_n$ as in Theorem~2.1 of \cite{EZ_proj}.  We find a neural network architecture which approximates $T_{n,m}$ on the compact $\phi_n\tilde P_n(\mathbb K)$. With these definitions one can verify that the approximation \eqref{eqn:LS_NN} holds for all $x\in \mathbb K$, by considering the same chain of inequalities in Theorem~2.1 of \cite{EZ_proj} with $\tilde P_n$ replacing $P_n$. 
		\end{proof}
		
		\begin{remark}
			{\rm 
					We note that the proof of Theorem~\ref{thm:LS_NN} is showing an extra bit of information, other than the approximation of $T$. It is showing that this can be done by also approximating the analytically defined maps $P_n$ by neural network counterparts $\tilde P_n$. 
			}
		\end{remark}
			
		Theorem~\ref{thm:LS_NN} produces an approximation result similar to that of \cite{chen}, and consequently that of DeepONet \cite{DeepOnet}. However, the present approach is based on significantly different methods. In particular, as a byproduct of this approach, the nonlinear maps used to ``project'' on a finite span of neural network basis functions are obtained by approximating Leray-Schauder maps whose analytical formulation is known a priori. These are the maps $\tilde P_n$ used in the proof above. The functionals used in DeepONet to nonlinearly ``project'' on the finite dimensional space of trunk networks, do not approximate analytically defined maps, and it is therefore not clear a priori what they need to approximate for the universal approximation results to hold. 
		This fact has also implications on the algorithmic implementation given below, as the maps $\tilde P_n$ are implemented with neural networks explicitly.  
		
		The construction of $\tilde P_n$, in practice, implies the fact that we need to map on a space spanned by neural networks through the use of the functions $\tilde\mu_i$, where we need to find $g_i$ such that the maps $\tilde\mu_i$ have the required properties. We would like to implement $\tilde \mu_i$ in $\tilde P_n$ as neural networks as well, 
		so that the whole architecture consists of neural network components only.  
		In the rest of this section we will work toward this direction, showing that it is indeed possible to replace $\tilde P_n$ by neural networks. This construction will be seen to be a universal approximation architecture on functions that are regular enough with respect to the norms of their derivatives. We will denote by $L^p_\mu(K)$ the $L^p$-space on a compact $K$ for the measure $\mu$ which is assumed to be Lebesgue and normalized on $K$ for simplicity, although the reasoning can be adapted to finite Borel measures. We consider $1\leq p < +\infty$, and we denote the H\"older space  by $C^{\kappa,\alpha}([a,b])$, where $\kappa\geq 1$, and $\alpha\in (0,1]$. We assume that $p\geq \kappa + \alpha$.
		We will later mention how to generalize the result to the multivariate case. This relies on cubature formulas, and our choice of proceeding with the single varaible case first, is motivated by the wish of keeping the notation as simple as possible.  
		
		For an interval $[a,b]$ of $\mathbb R$, for a discretization (e.g. a partition) $\{t_1,\ldots, t_k\} \subset [a,b]$ and a neural network $F: \mathbb R^k \longrightarrow \mathbb R^s$, we define $F(f) := F(f(t_1),\ldots, f(t_k))$ for any function $f: [a,b] \longrightarrow \mathbb R$.
		In other words, $F$ is applied to the evaluation of $f$ on the nodes of discretization. In machine learning, one usually has access to data sampled on discretizations of the domains of definition. Therefore, the neural network can work as a proxy of an operator. We leverage this fact to approximate the Leray-Schauder mappings $P_n$ via neural networks. In what follows we restrict ourselves to quadrature rules that are continuous on their inputs. We will say that a function $u$ is sufficiently smooth if it is such that $|u|^p$ is in $C^{\kappa,\alpha}([a,b])$, e.g. it is $k+2$ times differentiable with all continuous derivatives in a neighborhood of $[a,b]$ since $p\geq\kappa+\alpha$. 

		\begin{lemma}\label{lem:Pn_NN}
					Let $\mathbb K\subset L^p_\mu([a,b])$ be a compact subspace, and let $P_n$ be the Leray-Schauder map $P_n : \mathbb K \longrightarrow E_n$, where $E_n$ is spanned by an $\epsilon$-net of sufficiently smooth functions $\{x_1, \ldots, x_n\}$. Then, for any $\eta>0$, there exists a discretization $\{t_1,\ldots, t_k\}$ of $[a,b]$, and neural networks $F^i: \mathbb R^k \longrightarrow \mathbb R$ such that
					\begin{eqnarray*}
							\|P_n(x) - \sum_{i=1}^n\frac{F^i(x)x_i}{\sum_{j=1}^nF^j(x)}\|_p &<& \eta,
					\end{eqnarray*}
					for all $x\in \mathbb K\bigcap \bar B_\rho$, where $B_\rho$ is an open ball in the H\"older sapce $C^{\kappa+1,\alpha}([a,b])$, 
					for any $\rho>0$ such that $\max_i\|x_i\|_\infty \leq \rho$. 
		\end{lemma}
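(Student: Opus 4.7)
The plan is to approximate each $\mu_i$ componentwise by neural network outputs $F^i$ that take the discretized values $(x(t_1),\ldots,x(t_k))$ as inputs, and then to transfer this componentwise approximation to an $L^p$-bound on the full Leray-Schauder ratio via the same stability argument used in Theorem \ref{thm:LS_NN}.

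First, I would select a quadrature rule with nodes $\{t_1,\ldots,t_k\}$ and non-negative weights $\{w_j\}$ adapted to the measure $\mu$, whose error tends to zero uniformly on bounded subsets of $C^{\kappa,\alpha}([a,b])$. Since each $x_i$ is smooth and $x \in \mathbb{K}\cap \bar B_\rho$, the integrand $t \mapsto |x(t)-x_i(t)|^p$ lies in a bounded subset of a suitable H\"older class, and the hypothesis $p > 2\kappa - 1$ is exactly what ensures that the power $|\cdot|^p$ retains enough regularity for the quadrature error bound to close. Consequently, defining $\tilde N_i(y_1,\ldots,y_k) := \bigl(\sum_j w_j |y_j - x_i(t_j)|^p\bigr)^{1/p}$, for any $\delta>0$ and all $k$ large enough one gets
\[ \bigl| \|x-x_i\|_p - \tilde N_i(x(t_1),\ldots,x(t_k))\bigr| < \delta \]
uniformly in $x \in \mathbb{K}\cap \bar B_\rho$. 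Setting $\tilde\mu_i(y) := \max(\epsilon - \tilde N_i(y),0)$ then produces a continuous function on the compact box $[-R,R]^k$, where $R$ bounds $(x(t_1),\ldots,x(t_k))$ uniformly via the embedding $C^{\kappa,\alpha}\hookrightarrow C^0$ applied to $\bar B_\rho$.

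Next, the standard universal approximation theorem yields neural network components $F^i$ satisfying $|\tilde \mu_i(y) - F^i(y)| < \delta$ uniformly on $[-R,R]^k$; composing with a ReLU at the output guarantees $F^i \geq 0$. Combining with the quadrature bound gives $|\mu_i(x) - F^i(x)| < q$, uniformly in $x \in \mathbb{K}\cap \bar B_\rho$, for arbitrarily small $q>0$. From here the argument mirrors Theorem \ref{thm:LS_NN}: I first establish
\[ m := \inf_{x \in \mathbb{K}\cap \bar B_\rho}\sum_{j=1}^n \mu_j(x) > 0 \]
by the sequential-compactness argument reproduced there (with $\mathbb K \cap \bar B_\rho$ compact in $L^p_\mu$), and then bound
\[ \Bigl|\frac{\mu_i(x)}{\sum_j \mu_j(x)} - \frac{F^i(x)}{\sum_j F^j(x)}\Bigr| \]
via the triangle and product estimates appearing in the proof of Theorem \ref{thm:LS_NN}. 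Summing against the fixed $x_i$, this produces an $L^\infty$-error that can be made arbitrarily small, and hence an $L^p$-error below $\eta$ (absorbing the constants $(b-a)^{1/p}$ and $\max_i\|x_i\|_\infty$ into the choice of $q$).

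The principal obstacle is the first step: securing a quadrature error bound for $|x-x_i|^p$ that is \emph{uniform} over $\mathbb{K}\cap \bar B_\rho$. Since elements of $\mathbb K$ are only assumed to be $L^p$, a uniform quadrature estimate is generally unavailable; this is precisely why the statement restricts to the intersection with the H\"older ball and imposes $p > 2\kappa-1$, so that $t \mapsto |x(t)-x_i(t)|^p$ has a controlled modulus of continuity depending only on $\rho$, on the smoothness of $x_i$, and on $p$. Once this uniform quadrature bound is secured, the remainder of the proof is a routine repetition of the stability argument already carried out in Theorem \ref{thm:LS_NN}.
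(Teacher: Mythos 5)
Your proposal is correct and follows essentially the same route as the paper's proof: discretize the $L^p$-norm by a quadrature rule whose error is uniform over $\mathbb K \cap \bar B_\rho$ thanks to the H\"older bound, obtaining discretized coefficients $\hat\mu_i$ as continuous functions of $(x(t_1),\ldots,x(t_k))$; approximate these by nonnegative networks $F^i$ on a compact box via the standard universal approximation theorems; and control the normalized ratios using the sequential-compactness argument that $\inf_{x}\sum_{j=1}^n\mu_j(x) > 0$. The only cosmetic difference is that you merge the quadrature and network errors into a single componentwise bound $|\mu_i(x) - F^i(x)| < q$ before one stability estimate, whereas the paper inserts the intermediate map $\hat P_n$ and splits the triangle inequality through it.
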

		\begin{proof}
				As in the case of Theorem~\ref{thm:LS_NN}, letting $\mu_i$ indicate the maps used to define $P_n$, we have 
				\begin{eqnarray}\label{eqn;min_sum_mui}
						\min_{x\in\mathbb K}\sum_{j=1}^n\mu_j(x) > 0.
				\end{eqnarray} 
				We put $m : =	\min_{x\in\mathbb K}\sum_{j=1}^n\mu_j(x)$. We also set $M := \max_{x\in\mathbb K}\sum_{j=1}^n\mu_j(x)$, which is a finite number since each $\mu_i$ is continuous over a compact.
				
				We now introduce an integration scheme such that we can approximate the integrals that give $\|\cdot\|_p$ by a quadrature rule uniformly on elements of $\mathbb K$ that also lie in a given H\"older ball centered at the origin, $\bar B_\rho$, as in the statement of the lemma. For simplicity, we can think of this as being the forward rectangle rule, even though the same argument can be applied to other rules that converge faster (e.g. Trapezoidal, Cavalieri-Simpson, Boole) upon changing the error bounds, and taking $k$ large enough in the statement of this lemma, depending on the value of $\kappa$. A similar procedure was used in \cite{uni} to approximate integral operators with applications to the transformer architecture. For $u\in\bar B_\rho$, we have that $\sup_{[a,b]} |u'| \leq \rho$, and therefore we find that the quadrature rule gives uniform error bounds of $e_k = \frac{b-a}{2k} \rho$ for all $u\in \mathbb K\bigcap \bar B_\rho$, where $k$ is the number of points used to discretize the interval $[a,b]$. We indicate the quadrature rule to approximate the integral $\int_a^bfd\mu$ by $\frak I(f)$. In addition, as $k\longrightarrow +\infty$ we have that $|\hat \mu_i(x) - \mu_i(x)| \longrightarrow 0$ for all $i=1,\ldots, n$ and for all $x\in \mathbb K\bigcap \bar B_\rho$, where we define
				\begin{eqnarray}\label{eqn:hat_mui}
				\hat\mu_i(x) = \begin{cases}
				\epsilon - \frak I(|x-x_i|^p)^{\frac{1}{p}}, \hspace{1cm} \frak I(|x-x_i|^p)^{\frac{1}{p}} \leq \epsilon\\
				0, \hspace{3.6cm} \frak I(|x-x_i|^p)^{\frac{1}{p}} > \epsilon
				\end{cases}
				\end{eqnarray}
				for all $i=1, \ldots , n$. 
				We define $\hat P_n(x) := \sum_{i=1}^n\frac{\hat\mu_i(x)x_i}{\sum_{j=1}^n\hat\mu_j(x)}$. Then, for all $x\in \mathbb K\bigcap \bar B_\rho$, whenever $k$ is large enough so that $m-e_k>0$, we have
				\begin{eqnarray*}
						\|\hat P_n(x) - P_n(x)\|_p
						&\leq& \sum_{i=1}^n \|\frac{\hat\mu_i(x)x_i}{\sum_{j=1}^n\hat\mu_j(x)} - \frac{\mu_i(x)x_i}{\sum_{j=1}^n\mu_j(x)}\|_p\\
						&=& \sum_{i=1}^n \|x_i\|_p\cdot |\frac{\hat\mu_i(x)}{\sum_{j=1}^n\hat\mu_j(x)} - \frac{\mu_i(x)}{\sum_{j=1}^n\mu_j(x)}|\\
						&\leq& \sum_{i=1}^n \frac{ \|x_i\|_p}{(m-e_k)m}\cdot |\sum_{j=1}^n\hat\mu_i(x)\mu_j(x) - \sum_{j=1}^n\mu_i(x)\hat\mu_j(x)|\\
						&\leq& \sum_{i=1}^n \frac{ \|x_i\|_p}{(m-e_k)m}\cdot
									\sum_{j=1}^n |\hat\mu_i(x)\mu_j(x) - \mu_i(x)\hat\mu_j(x)|\\
						&\leq& \sum_{i=1}^n \frac{ \|x_i\|_p}{(m-e_k)m}\cdot
						\sum_{j=1}^n [|\hat\mu_i(x)\mu_j(x) - \mu_i(x)\mu_j(x)|+|\mu_i(x)\mu_j(x) - \mu_i(x)\hat\mu_j(x)|]\\
						&=&  \sum_{i=1}^n \frac{ \|x_i\|_p}{(m-e_k)m}\cdot
						\sum_{j=1}^n [|\mu_j(x)|\cdot|\hat\mu_i(x) - \mu_i(x)|+|\mu_i(x)|\cdot|\mu_j(x) - \hat\mu_j(x)|]\\
						&\leq& \sum_{i=1}^n \frac{ \|x_i\|_p}{(m-e_k)m}\cdot\sum_{j=1}^n2Me_k\\
						&=& \frac{2n^2Me_k}{m^2-me_k}\cdot \max_i \|x_i\|_p.
				\end{eqnarray*}
				As we increase the discretization points $k$, i.e. as $k\longrightarrow +\infty$, it follows that $\|\hat P_n(x) - P_n(x)\|_p \longrightarrow 0$, and we can therefore approximate $P_n$ by means of the map $\hat P_n$ induced by the quadrature rule $\frak I$ over $\mathbb K\bigcap \bar B_\rho$, as long as we choose $k$ large enough. The last step of the proof, is to approximate the quadrature rule by means of a neural network. 
				
				Since the functions $\hat \mu_i : \mathbb R^k \longrightarrow \mathbb R^+_0$ are continuous with respect to the inputs of the discretization of the integration scheme $\frak I$, we can approximate each function $\hat \mu_i$ by a neural network $F^i_\theta$, according to the results in \cite{Horn,Fun,LLPS,Lu,Pink} with arbitrarily high precision for each $x\in \mathbb K\bigcap \bar B_\rho$. In particular, we can find architectures such that 
				\begin{eqnarray*}
						|\frac{\hat\mu_i(x)}{\sum_{j=1}^n \hat \mu_j(x)} - \frac{F^i_\theta(x)}{\sum_{j=1}^n F^j_\theta(x)}| < \frac{\eta}{2n\max_i \|x_i\|_p}, 
				\end{eqnarray*}
				for all $x\in \mathbb K\bigcap \bar B_\rho$. Then, if $k$ is large enough such that $\frac{2n^2Me_k}{m^2-me_k}\cdot \max_i \|x_i\|_p < \frac{\eta}{2}$, we find that for all $x\in \mathbb K\bigcap \bar B_\rho$ we have
				\begin{eqnarray*}
						\|P_n(x) - \sum_{i=1}^n\frac{F^i(x)x_i}{\sum_{j=1}^nF^j(x)}\|_p
						&\leq& \|P_n(x) - \hat P_n(x)\|_p + \|\hat P_n(x) -  \sum_{i=1}^n\frac{F^i(x)x_i}{\sum_{j=1}^nF^j(x)}\|_p\\
						&\leq& \frac{2n^2Me_k}{m^2-me_k}\cdot \max_i \|x_i\|_p + \sum_{i=1}^n\|\frac{\hat \mu_i(x)x_i}{\sum_{j=1}^n \hat \mu_j(x)} - \frac{F^i_\theta(x)x_i}{\sum_{j=1}^nF^j_\theta(x)}\|_p\\
						&<&  \frac{\eta}{2} + \sum_{i=1}^n\|x_i\|_p\cdot |\frac{\hat \mu_i(x)}{\sum_{j=1}^n \hat \mu_j(x)} - \frac{F^i_\theta(x)}{\sum_{j=1}^nF^j_\theta(x)}|\\
						&<& \frac{\eta}{2} +  \frac{\eta}{2}\\
						&=& \eta. 
				\end{eqnarray*}
				This completes the proof. 
		\end{proof}
	
		\begin{remark}
			{\rm 
					We notice that upon taking some suitable nonlinearity, we can also ensure that the output of each $F^i_\theta$ in the proof of Lemma~\ref{lem:Pn_NN} is nonnegative. In this sense, the map $x \mapsto (\frac{F^1_\theta(x)}{\sum_{j=1}^n F^j_\theta(x)}, \ldots, \frac{F^n_\theta(x)}{\sum_{j=1}^n F^j_\theta(x)})^T$ resembles the components of a Leray-Schauder map $P_n$. 
			}
		\end{remark}
	
		\begin{proposition}
					In Lemma~\ref{lem:Pn_NN} we can choose the neural network $F$ to consist of a single hidden layer feed forward architecture. 
		\end{proposition}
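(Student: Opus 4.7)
The plan is to invoke the sharper form of the universal approximation theorems already cited in the proof of Lemma~\ref{lem:Pn_NN}, namely the results of Leshno-Lin-Pinkus-Schocken and Pinkus, which assert that for any non-polynomial continuous activation $\sigma$ the single-hidden-layer architectures of the form $x\mapsto \sum_{\ell} c_{\ell}\sigma(w_{\ell}^T x + b_{\ell})$ are dense in $C(K)$ on every compact $K\subset \mathbb R^k$. Since the neural networks $F^i_\theta$ constructed in Lemma~\ref{lem:Pn_NN} are only required to approximate the continuous scalar functions $\hat\mu_i:\mathbb R^k\longrightarrow \mathbb R^+_0$ on a fixed compact, this shallower architecture already suffices.

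First, I would recall that the effective input set to the networks is the image of $\mathbb K\bigcap \bar B_\rho$ under the evaluation map $x\mapsto (x(t_1),\ldots,x(t_k))$; this image is a continuous image of a compact subspace and is therefore itself a compact subset of $\mathbb R^k$. Restricting each continuous $\hat\mu_i$ to this compact set and invoking the Leshno-Lin-Pinkus-Schocken theorem produces, for any prescribed tolerance, a single-hidden-layer network $F^i_\theta$ with the approximation guarantee
\begin{eqnarray*}
\Bigl|\frac{\hat\mu_i(x)}{\sum_{j=1}^n \hat \mu_j(x)} - \frac{F^i_\theta(x)}{\sum_{j=1}^n F^j_\theta(x)}\Bigr| < \frac{\eta}{2n\max_i \|x_i\|_p}
\end{eqnarray*}
required in the proof of Lemma~\ref{lem:Pn_NN}.

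Second, I would handle the nonnegativity requirement exactly as in Lemma~\ref{lem:Pn_NN}, by composing the output of each $F^i_\theta$ with a ReLU (or softplus) nonlinearity. Interpreted as an output activation rather than an additional processing stage, this keeps the architecture within the single-hidden-layer class, and since $\hat\mu_i$ itself is nonnegative, clipping at zero only improves the approximation on $\mathbb K\bigcap \bar B_\rho$.

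Finally, to assemble the vector-valued network $F=(F^1_\theta,\ldots,F^n_\theta)^T$ I would take the disjoint union of the hidden units of the $n$ scalar networks and attach an output layer of $n$ neurons, one reading off each $F^i_\theta$; the result is still a single-hidden-layer feedforward network from $\mathbb R^k$ to $\mathbb R^n$, and plugging it into the conclusion of Lemma~\ref{lem:Pn_NN} finishes the proof. The only nontrivial point, and the main obstacle, is reconciling nonnegativity with the one-hidden-layer constraint; this is resolved by pushing the cutoff into the output activation, so no extra hidden layer is introduced.
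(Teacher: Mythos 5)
Your proposal is correct and takes essentially the same approach as the paper: both arguments reduce to the observation that the discretized inputs $(x(t_1),\ldots,x(t_k))$ for $x\in\mathbb K\bigcap \bar B_\rho$ lie in a fixed compact subset of $\mathbb R^k$ (the paper gets this directly from $\|x\|_\infty\leq\rho$, giving a bounded hypercube, which is cleaner than your appeal to continuity of point evaluation --- not $L^p$-continuous in general, and needing the H\"older ball to justify), and then invoke single-hidden-layer universal approximation over compacts. Your extra remarks on the nonnegativity cutoff and on merging the scalar networks into one shallow vector-valued network simply make explicit details the paper leaves implicit.
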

		\begin{proof}
					For $x\in \mathbb K \bigcap \bar B_\rho$ we have that $\|x\|_\infty \leq \rho$. Therefore, for a discretization $\{t_1,\ldots, t_k\}$ we have that $\max_{\ell,i} |x(t_\ell) - x_i(t_\ell)| \leq 2\rho$. It follows that the inputs of $\frak I$ are in the bounded hypercube $[-q,q]^{\times k}$, where $q = 2^p\rho^p$. We can therefore apply the results for universal approximation over compacts.
		\end{proof}
	
			In the following result, we consider nonlinearities for the neural networks that are sufficiently smooth, in the sense defined above.  
		
		\begin{theorem}\label{thm:P_NN}
					Let $T: X \longrightarrow X$ be a continuous map, where $X = L^p_\mu([a,b])$. Let $\mathbb K \subset X$ be a compact in $X$. Then, for any given $\epsilon > 0$ we can find $n, m\in \mathbb N$, $\rho>0$, and neural networks $\{g_i\}_{i=1}^n, \{h_j\}_{j=1}^m, f_{n,m}$ and $\{F^i\}_{i=1}^n$ such that 
					\begin{eqnarray}
							\|T(x) - \phi_m^{-1}f_{n,m}\phi_n(\sum_{i=1}^n\frac{F^i(x)g_i}{\sum_{j=1}^nF^j(x)})\|_p &<& \epsilon, 
					\end{eqnarray}
					for all $x\in \mathbb K \bigcap \bar B_\rho$, where $\bar B_\rho$ is a radius $\rho$ closed ball in $C^{\kappa+1,\alpha}([a,b])$, and the neural networks $h_i$ span the domain of $\phi_m$.
		\end{theorem}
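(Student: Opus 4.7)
The plan is to chain the two preceding results. Theorem~\ref{thm:LS_NN} reduces $T$ to a neural network $f_n$ acting on the coordinates of a Leray-Schauder projection $\tilde P_n$ onto a neural-network basis $\{g_i\}$, while Lemma~\ref{lem:Pn_NN} replaces the explicit Leray-Schauder weights by a neural network $F$. Composing the two approximations and propagating the inner one through $\phi_n^{-1}f_n\phi_n$ by uniform continuity yields the architecture required in the statement.

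The first step is to invoke the $L^p_\mu$ analogue of Theorem~\ref{thm:LS_NN}. Its proof uses only compactness of $\mathbb K$, uniform continuity of $T$ on $\mathbb K$, the algebraic stability estimate for the ratios $\mu_i/\sum_j\mu_j$ under perturbations of the basis, and universal approximation of a smooth $\epsilon$-net by neural networks. Replacing $\|\cdot\|_\infty$ by $\|\cdot\|_p$ throughout is routine: we select a smooth $\epsilon$-net $\{x_1,\ldots,x_n\}$ in $\mathbb K$ (smooth functions are $L^p$-dense in $X$) and then use that sup-norm approximation implies $L^p$-approximation to produce neural networks $g_i$ with $\|x_i-g_i\|_p$ arbitrarily small. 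The adapted argument yields neural networks $g_1,\ldots,g_n$, the associated Leray-Schauder map $\tilde P_n$ on $F_n:=\mathrm{span}(g_1,\ldots,g_n)$, and a neural network $f_n:\mathbb R^n\longrightarrow\mathbb R^n$ with
\begin{eqnarray*}
\|T(x)-\phi_n^{-1}f_n\phi_n\tilde P_n(x)\|_p<\tfrac{\epsilon}{2},\qquad x\in\mathbb K.
\end{eqnarray*}

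Next, I apply Lemma~\ref{lem:Pn_NN} (together with the single-hidden-layer refinement from the following proposition) using the neural networks $g_1,\ldots,g_n$ in place of the smooth $\epsilon$-net of the lemma; this is legitimate since neural networks with smooth activations are themselves smooth. Fixing $\rho>0$ so that $\max_i\|g_i\|_\infty\leq\rho$, the lemma provides a discretization of $[a,b]$ and a neural network $F=(F^1,\ldots,F^n)^T$ with
\begin{eqnarray*}
\|\tilde P_n(x)-\sum_{i=1}^n\frac{F^i(x)g_i}{\sum_{j=1}^n F^j(x)}\|_p<\eta,\qquad x\in\mathbb K\cap\bar B_\rho,
\end{eqnarray*}
for any prescribed $\eta>0$. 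Write $\Phi(x):=\sum_iF^i(x)g_i/\sum_jF^j(x)$. Both $\phi_n\tilde P_n(\mathbb K)$ and, for $\eta$ sufficiently small, $\phi_n\Phi(\mathbb K\cap\bar B_\rho)$ lie in a common compact $C\subset\mathbb R^n$ on which $\phi_n^{-1}f_n\phi_n$ is uniformly continuous; shrinking $\eta$ below its modulus of continuity at level $\epsilon/2$ yields $\|\phi_n^{-1}f_n\phi_n\tilde P_n(x)-\phi_n^{-1}f_n\phi_n\Phi(x)\|_p<\epsilon/2$ on $\mathbb K\cap\bar B_\rho$, and the triangle inequality closes the argument.

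The main obstacle is organizational rather than conceptual: the constants must be locked in a precise back-to-front order — $n$ and the $g_i$ are chosen first from Theorem~\ref{thm:LS_NN}, then $\rho$ is fixed to dominate $\max_i\|g_i\|_\infty$ (so that the Hölder regularity needed for the quadrature error bound in Lemma~\ref{lem:Pn_NN} is available), and finally the discretization order and the neural network $F$ are tuned so that $\eta$ falls below both the $L^p$ tolerance of Lemma~\ref{lem:Pn_NN} and the $f_n$-uniform-continuity threshold above. No genuinely new estimate is required beyond those of the two cited results, but the transcription of Theorem~\ref{thm:LS_NN} from the $\|\cdot\|_\infty$ to the $\|\cdot\|_p$ setting should be noted explicitly since it is where the $L^p$ version of the intermediate approximation actually enters.
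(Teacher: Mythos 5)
Your proposal is correct and follows essentially the same route as the paper's proof: transcribe Theorem~\ref{thm:LS_NN} to the $L^p$ setting (the paper notes the estimates after Theorem~\ref{thm:Universal} do not depend on the uniform norm), take the $\epsilon$-net to consist of smooth neural networks $g_i$, apply Lemma~\ref{lem:Pn_NN} to replace the Leray-Schauder weights by $F$, and absorb the inner $\eta$-error through uniform continuity of $f_n$ on a compact. The only cosmetic difference is how that compact is produced --- you take a closed $\eta$-neighborhood of the compact $\phi_n\tilde P_n(\mathbb K)$ in $\mathbb R^n$, whereas the paper bounds $\bigl\|\sum_{i=1}^n F^i(x)g_i/\sum_{j=1}^n F^j(x)\bigr\|_p$ directly and invokes boundedness in the finite-dimensional span --- but these are equivalent observations.
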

		\begin{proof}
					The proof is similar to that of Theorem~\ref{thm:LS_NN}, where the details following Theorem~\ref{thm:Universal} do not depend on the use of the uniform norm. The main difference here lies in the use of Lemma~\ref{lem:Pn_NN}, to construct a neural network Leray-Schauder mapping $F$ which plays the role of $P_n$ in Theorem~\ref{thm:Universal}, and $\tilde P_n$ in Theorem~\ref{thm:LS_NN}. We therefore provide only the details pertaining to this aspect of the proof. 
					In the construction of $P_n$, we find continuous functions $\{x_1,\ldots, x_n\}$  such that  $\mathbb K$ is contained in $\delta$-balls centered at $x_i$. Furthermore, since we can approximate each $x_i$ by neural networks $g_i$ in the uniform norm, we can assume that the each element used to construct $P_n$ is a neural network, and upon taking sufficiently smooth nonlinearities, we can assume that they are in $C^{\kappa+1,\alpha}([a,b])$, so that the function $|g_i|^p$ is in $C^{\kappa,\alpha}([a,b])$, following essentially the same procedure as in Theorem~\ref{thm:LS_NN}.  
					We assume that the $g_i$'s span an $n$-dimensional space to simplify notation, since the proof is adapted directly as in \cite{EZ_proj} if otherwise.   
					Then, applying Lemma~\ref{lem:Pn_NN} we can find neural networks $F^i$ such that the Leray-Schauder mapping $P_n$ can be approximated by the mapping $x\mapsto \sum_{i=1}^n\frac{F^i(x)g_i}{\sum_{j=1}^nF^j(x)}$ for all $x\in \mathbb K\bigcap \bar B_\rho$ with arbitrary precision, where $\rho \geq \max_i \|g_i\|_\infty$. Now, to apply the arguments of Theorem~\ref{thm:Universal} and Theorem~\ref{thm:LS_NN} there is one last subtlety to handle. While $P_n(\mathbb K)$ is a compact in $E_n$, it is not a priori clear whether the image of $x\mapsto \sum_{i=1}^n\frac{F^i(x)g_i}{\sum_{j=1}^nF^j(x)}$, for $x\in \mathbb K \bigcap \bar B_\rho$ is compact. However, by construction we have that 
					\begin{eqnarray*}
							\|\sum_{i=1}^n\frac{F^i(x)g_i}{\sum_{j=1}^nF^j(x)}\|_p 
							&\leq& \sum_{i=1}^n\|\frac{F^i(x)g_i}{\sum_{j=1}^nF^j(x)}\|_p\\
							&=&  \sum_{i=1}^n|\frac{F^i(x)}{\sum_{j=1}^nF^j(x)}|\cdot \|g_i\|_p\\
							&\leq&  \sum_{i=1}^n|\frac{\delta+\eta}{m_0-\eta}|\cdot  \|g_i\|_p\\
							&\leq& n\frac{\delta+\eta}{m_0-\eta}\max_i \|g_i\|_p,
					\end{eqnarray*}
					where we have denoted by $\eta$ the error in the approximation $P_n$ via  the mapping $x\mapsto \sum_{i=1}^n\frac{F^i(x)g_i}{\sum_{j=1}^nF^j(x)}$, we used the fact that $\mu_i(x) \leq \delta$ always, we have denoted by $m_0$ the minimum of $P_n$ on $\mathbb K$, and we used the properties obtained in the proof of Lemma~\ref{lem:Pn_NN}. Then, the image of the mapping $x\mapsto \sum_{i=1}^n\frac{F^i(x)g_i}{\sum_{j=1}^nF^j(x)}$ is bounded in the finite dimensional space $E_n$ for all $\eta$ sufficiently small. We can therefore bound it by a closed ball in $E_n$ which is compact, since $E_n$ with the norm induced by the $p$ norm of $X$ is equivalent to the $n$-dimensional Euclidean space.  
					So, the neural network $f_{n,m}$ takes inputs for the values of $P_n(\mathbb K)$ and the image of $x\mapsto \sum_{i=1}^n\frac{F^i(x)g_i}{\sum_{j=1}^nF^j(x)}$ over a compact, where it is uniformly continuous. Therefore, we can choose the $\eta$ approximation of $P_n$ by $x\mapsto \sum_{i=1}^n\frac{F^i(x)g_i}{\sum_{j=1}^nF^j(x)}$ such that $\|f_{n,m}(\phi_nP_n(x)) - f_{n,m}\phi_n\sum_{i=1}^n\frac{F^i(x)g_i}{\sum_{j=1}^nF^j(x)}\|_p < \frac{\epsilon}{\|\phi_m^{-1}\|}$. Then, we have
					\begin{eqnarray*}
							\|T(x) - (\phi_m^{-1}f_n\phi_n)\sum_{i=1}^n\frac{F^i(x)g_i}{\sum_{j=1}^nF^j(x)}\|_p
							&\leq& \|T(x) - TP_n(x)\|_p + \|TP_n(x) - P_mTP_n(x)\|_p\\
							&& + \|P_mTP_n(x) - (\phi_m^{-1}f_{n,m}\phi_n)P_n(x)\|_p\\
							&& + \|(\phi_m^{-1}f_{n,m}\phi_n)P_n(x) - (\phi_m^{-1}f_{n,m}\phi_n)\sum_{i=1}^n\frac{F^i(x)g_i}{\sum_{j=1}^nF^j(x)}\|_p\\
							&<& \frac{\epsilon}{3} + \frac{\epsilon}{3} + \frac{\epsilon}{3} + \epsilon\\
							&=& 2\epsilon. 
					\end{eqnarray*}
					Since $\epsilon$ was arbitrary, the proof is complete.
		\end{proof}
	
		\begin{remark}
			{\rm 
				We notice that the proof can proceed in the same way if $T: X \longrightarrow Y$ where $Y = L^q_\mu([a,b])$ with $q$ possibly different from $p$.
			}
		\end{remark}

		We now discuss the generalization of the previous result to the multivariate case $L^p([a,b]^r)$, in which we will replace the quadrature rules in Lemma~\ref{lem:Pn_NN} by cubature formulas \cite{Sobolev}, restricting ourselves to rules that are continuous in their inputs. We consider the space $C^{\kappa+1,\alpha}([a,b]^r)$, on $[a,b]^r$, and assume as before $p$ sufficiently large. As in the previous result, the approximation holds for those functions in $\mathbb K$ with some extra regularity. 
		
		\begin{theorem}\label{thm:P_NN_higherD}
				Let $T: X \longrightarrow Y$ be continuous map, where $X = L^{p_1}_\mu([a,b]^{r_1})$ and $Y = L^{p_2}_\mu([c,d]^{r_2})$. Let $\mathbb K \subset X$ be a compact in $X$. Then, for any given $\epsilon > 0$ we can find $n, m\in \mathbb N$, $\rho>0$, and neural networks $\{g_i\}_{i=1}^n, \{h_j\}_{j=1}^m, f_{n,m}$ and $\{F^i\}_{i=1}^n$ such that 
				\begin{eqnarray}
				\|T(x) - \phi_m^{-1}f_{n,m}\phi_n(\sum_{i=1}^n\frac{F^i(x)g_i}{\sum_{j=1}^nF^j(x)})\|_{p_2} &<& \epsilon, 
				\end{eqnarray}
				for all $x\in \mathbb K \bigcap \bar B_\rho$, where $B_\rho$ is a radius $\rho$ ball in $C^{\kappa+1,\alpha}([a,b]^r)$ with $p_1\geq \kappa+ \alpha$, and the neural networks $h_i$ span the domain of $\phi_m$.
		\end{theorem}
		\begin{proof}
				The proof proceeds in the same way as for Theorem~\ref{thm:P_NN}, with the main difference lying in Lemma~\ref{lem:Pn_NN}. In this case we use cubature rules to construct maps $\hat \mu_i$ to approximate the operators $P_n$. Restricting to those elements of $\mathbb K$ that are also in $\bar  B_\rho$ guarantees that by increasing the grid points used for the cubature formulas, the approximation $\hat\mu_i$ can be made arbitrarily precise. Moreover, the inputs of the cubatures lie in some hypercube of the Euclidean space, so that the neural networks can be chosen to approximate the cubaature formulas on a compact. 
		\end{proof}
	
		\section{Algorithm}\label{sec:Algo} 
		
		 In this section, we describe more in detail the concrete resulting algorithm from the previous theoretical results. Our deep learning algorithm is based on constructing a function input from the data available for initialization (e.g. an initial condition or boundary values, depending on the formulation of the problem), and (nonlinearly) project the input on a basis of functions modeled with neural networks. The projection maps are Leray-Schauder maps, defined through Equation~\eqref{eqn:LS}. The maps $\mu_i$ are either fixed as in Equation~\eqref{eqn:mui} (for some choice of norm), or are learned as in Theorem~\ref{thm:LS_NN} and Theorem~\ref{thm:P_NN}. In fact, while our experiments show that the assignment in Equation~\eqref{eqn:mui} gives a functioning model, we have found that hyperparameter fine tuning is more complex in this setting, and simply learning the $\mu_i$ maps, which in turn means that we learn the Leray-Schauder maps as well, makes training much simpler. 
		 
		Following the results in Section~\ref{sec:Leray_Schauder}, the model consists of two basis sets of neural networks, and a neural network $f_{n,m}$ that operates on the spaces spanned by the bases. We point out some strong similarities with the DeepONet model, where one has a basis of neural networks (the trunk networks), and a set of functionals that project on such basis. In the present case, one can think of the first set of Leray-Schauder approximations and $f_{n,m}$ as playing the role of the projection functionals in DeepONet, and the second set of Leray-Schauder approximations as playing the role of the trunk neural networks.  However, the resulting model does not seem to be equivalent to DeepONet, as it is not a priori clear whether the functionals of DeepONets can always decomposed into a Leray-Schauder approximation plus a feedforward neural network $f_{n,m}$. 
		
		Since we also learn the functions $\mu_i$ in Equation~\eqref{eqn:mui}, in practice we consider convolutional neural networks $\mu_i: C(K,\mathbb R^M) \longrightarrow \mathbb R^+_0$. We use CNNs to implement the $\mu_i$ neural networks because they mimic an integration procedure which can be performed in higher dimensions. They take a discretized input function and produce a numerical value (which can be constrained to be nonnegative by choice of the final nonlinearity to ensure increased stability).  
		
				We summarize the construction by means of the following commutative diagram
				\begin{center}
						\begin{tikzcd}
							\mathbb K \arrow[rr,"T^\theta_{n,m}"]\arrow[d,"P_n"] &  & Y\\
							E_n \arrow[rr,"T_{n,m}",dashed] & & E_m\arrow[u,hook]\\
							\mathbb R^n\arrow[rr,"f_{n,m}"]\arrow[u,equal] & & \mathbb R^n\arrow[u,equal]
						\end{tikzcd}, 
				\end{center}
				where $T^\theta_{n,m}$ represents the introduced model, which we will call the {\it Leray-Schauder Neural Operator} for brevity. The map $T_{n,m}$ is obtained by considering the restriction of $P_mT$ on $E_n$, and it is drawn dashed because $f_{n,m}$ only approximates this map on $\mathbb K\cap \bar B_\rho$ as above, and therefore the diagram would not strictly commute with such $T_{n,m}$. We have indicated the identifications $\phi_n$ and $\phi_m$ by equality symbols.  
				The neural networks $\{h_j\}$ that span $E_m$ work in a fashion similar to the trunk neural networks of DeepONet, while the map $P_n$ is the approximation to the Leray-Schauder mapping $P_n$ of Section~\ref{sec:Leray_Schauder}. The composition $\hat \phi_m^{-1}f_{n,m}\phi_n P_n$ in this diagram would correspond to the functionals used in DeepONet. 
		
		 We use the initialization values (data available during inference) to create a function $\mathbf y(\mathbf x,t)$, where $\mathbf x\in C \subset \mathbb R^n$, and $t\in [a,b]$, where $K = C\times [a,b]$ in the setting of Theorem~\ref{thm:LS_NN}. This is the input to the operator. In practice, we obtain $\mathbf y$ by interpolating the input data, e.g. $\mathbf y(\mathbf x, 0)$ and $\mathbf y(\mathbf x, 1)$ in the experiments below. Then, we compute the Leray-Schauder mapping as
		 \begin{eqnarray*}
		 		P_n(\mathbf y) = \sum_{i=1}^n\frac{\mu^i(\mathbf y)g_i}{\sum_{j=1}^n\mu^j(\mathbf y)},
		 \end{eqnarray*}
		 which is the same operator used in the approximation of Theorem~\ref{thm:P_NN}, where $F^i$ has been named $\mu^i$ to mimic the definition in \eqref{eqn:mui}. Since $P_n(\mathbf y)$ can be evaluated through the networks $g_i$, it is clear that $P_n$ takes a function as input, obtained via initialization data, and produces a function as output. This function $P_n(\mathbf y)$ lives inside the span $\langle g_1, \ldots, g_n\rangle$. In fact, $P_n(\mathbf y)$ is a linear combination of $g_i$, with coefficients $q_i := \frac{\mu^i(\mathbf y)}{\sum_{j=1}^n\mu^j(\mathbf y)}$. The vector of coefficients $\mathbf q = (q_1,\ldots, q_n)^T\in \mathbb R^n$ is therefore the input of the network $f_{n,m}$, following the proof of Theorem~\ref{thm:P_NN}. 
		 
		We compute $\mathbf b = f(\mathbf q)$, and $\mathbf b$ is used to take a linear combination of the neural networks $h_i$, which we set $\psi = \sum_{i=1}^m b_ih_i$. We can then evaluate $\psi(\mathbf x,t)$ and compute the loss $\|\psi(\mathbf x,t) - \mathbf z(\mathbf x,t)\|$, where $\mathbf z$ is the target data which we are predicting. 
		
		This is summarized in Algorithm~\ref{algo:LS}. 
		
		\begin{algorithm}[h!]
			\caption{Algorithm for the Leray-Schauder neural operator.}
			\label{algo:LS}
			\begin{algorithmic}[1]
				\Require{Neural networks $\{g_i\}_{i=1}^n$, $\{h_j\}_{j=1}^m$ and $f_{n,m}$}
				\Ensure{Approximation $\phi_m^{-1}f_n\phi_n\tilde P_n$ of operator $T$}
				\State{Use initialization data to define input function $y$ (e.g. linear interpolation)}
				\State{Project $y$ on $\langle g_i\rangle$ using $\tilde P$: Coefficients $q_i$}
				\State{Use $\phi_n$ to identify coefficients $\mathbf q$ with elements of $\mathbb R^n$}
				\State{Apply $f_{n,m}$ on $\mathbf q$: $f_{n,m}(\mathbf q) = \mathbf b$}
				\State{Output function, using $\phi^{-1}_m$, is $\psi(\mathbf x,t) = \sum_{i=m}^n b_ih_i(\mathbf x,t)$}
				\State{Compute loss with target data $\mathbf z$: $\|\psi - \mathbf z\|$}
				\State{Compute gradients and use stochastic grandient descent to optimize}
			\end{algorithmic}
		\end{algorithm}

		\section{Experiments}\label{sec:exp}

		To demonstrate the capabilities of the method discussed in this article we experiment on two datasets. One consists of a set of spirals generated by solving an integral equation, and the other is a dataset on Burgers' equation. In both cases, the model has access at initialization to initial time $t=0$ and final time $t=1$ (initial and final time). The initialization is obtained by linearly interpolating between initial and final time configurations. 
		
			To test whether the model is stable under change of domain grid sampling, we consider two related situations. For the IE Spirals dataset, we consider the problem of interpolation. This means that the model is trained over a downsampled function, but tested on the full grid samples. It is seen that the model is stable under this upsampling procedure, due to the fact that the training procedure is effectively grid-independent. For the Burgers' dataset, we consider two different grid sampling sizes for training. We see that the model accuracy is substantially independent of this sampling specifications. It was seen in \cite{ANIE} that several models desplayed various behaviors under change of sampling size. While the ANIE model mitigated this via a Monte Carlo sampling procedure in the iterations of the solver, the present model is substantially more stable without the need of performing regularization procedures. In addition, the computational cost of the model is substantially unchanged when the grid size is varied. In fact, the main computational cost is related to the number of neural networks used for the two bases, and this value depends only on the problem at hand, and not on the grid size.  

		\begin{table}[h!]
			\caption{}\label{tab:spirals_Burger} 
			\centering
			\resizebox{\textwidth}{!}{\begin{tabular}{c c c | c c }
					\hline
					& \multicolumn{2}{c}{IE Spirals} & \multicolumn{2}{c}{Burgers'}\\
					\hline
					& Original & Interpolation & $s=256$ & $s=512$ \\
					\hline
				Leray-Schauder & $0.0011\pm0.0005$ &$0.0011\pm0.0005$ & $0.0017\pm 0.0012$ & $0.0017\pm0.0011$
					\\
				ANIE & $0.0014 \pm 0.0002$&$0.0015 \pm 0.0003$ & $0.0016\pm0.0016$ &	$0.0017\pm0.0018$
					\\
				Spectral NIE &$0.0022 \pm 0.0016$ & $0.0028 \pm 0.0032$ & -- &--
					\\
					FNO1D (init 5) &$0.0292 \pm 0.0285$ & $0.0994 \pm 0.1207$ & -- & --
					\\
					FNO1D (init 10) &$0.0286 \pm 0.0268$ & $0.1541 \pm 0.2724$ & -- & --
					\\
					\hline
			\end{tabular}}
		\end{table}\
	
		Both datasets were used in \cite{ANIE,Spectral}, and are publicly available at \url{https://figshare.com/articles/dataset/IE_spirals/25606242} and \url{https://figshare.com/articles/dataset/Burgers_1k_t400/25606149}. 
		
		The $2D$ integral equation spirals have been obtained by numerically solving an IE via iterative methods (Banach-Caccioppoli iterations). The solver is our implementation found in \cite{ANIE}, and whose code is publicly available. 
		The equation has the form:
		\begin{eqnarray*}
			\mathbf y(t) = \int_0^t \begin{bmatrix}
				\cos 2\pi(t-s) & -\sin 2\pi(t-s) \\
				-\sin 2\pi(t-s) & -\cos 2\pi(t-s)
			\end{bmatrix} \tanh(2\pi \mathbf y(s))dx
			+ \mathbf z_0 + \begin{bmatrix}
				\cos(t) \\
				\cos(t+\pi)
			\end{bmatrix},   
		\end{eqnarray*}
		where $z_0$ was sampled from a uniform distribution to derive different instances of the equation for the dataset.
		
		For the Burgers' equation, our dataset is generated using the Matlab code used in \cite{FNO}, which can be found in their GitHub page: 
		\url{https://github.com/zongyi-li/fourier\_neural\_operator/tree/master/data\_generation/burgers}. The solution is given on a spatial mesh of $1024$ and $400$ time points are generated from a random initial condition. We use $1000$ curves for training and test on $200$ unseen curves. Initial and final times are seen by the model at initialization, and all the time points constitute the ground truth for training the model. Some details regarding the Burgers' equation are given in \cite{ANIE}. 
		
		The implementational details of our model are found on our GitHub page  \url{https://github.com/emazap7/Leray_Schauder_neural_net}.

		In the experiments, we find that the model is comparable with the state of the art models (as in the experiments in \cite{ANIE}) on both datasets. For the IE spirals, we have included two experiments, one on prediction of the original dynamics, and another one on an interpolation task where the model has access to half the points during training and predicts all of them during evaluation. For the Burgers' dataset we have experimented on two spatial resolutions $s=256$ and $s=512$. An example of ground truth dynamics for the Burgers' equation is found in Figure~\ref{fig:obs}, while the corresponding model's prediction is in Figure~\ref{fig:pred}. The vertical direction indicates space and the horizontal direction indicates time.
		
		\begin{figure}[htb]
			\begin{center}
				\includegraphics[width=4in]{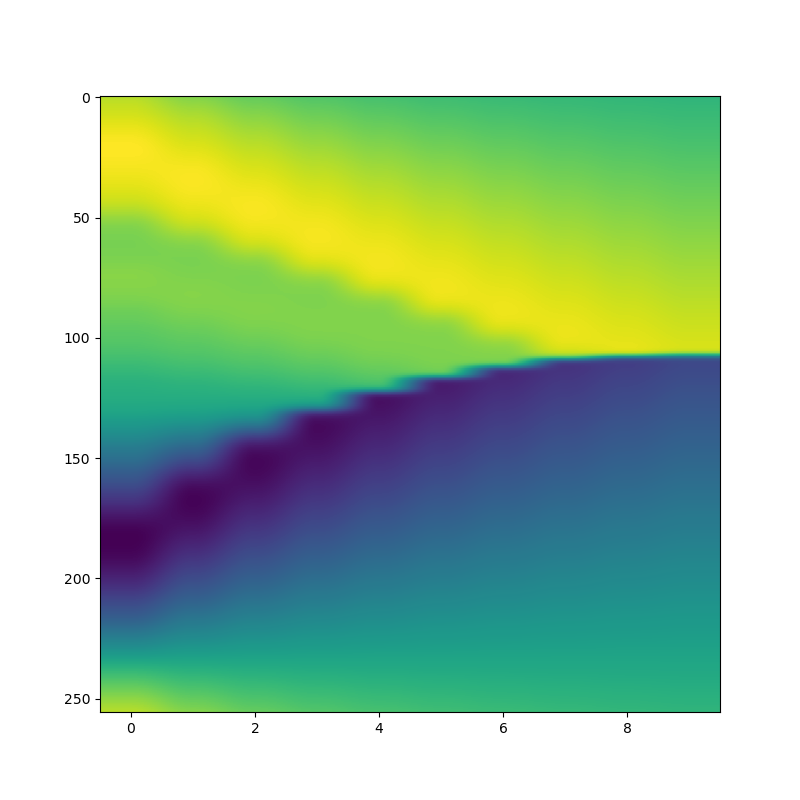}
			\end{center}
			\caption{Example of ground truth data for Burgers' dynamics where $\vec{x}$-axis is time and $\vec{y}$-axis represents space}
			\label{fig:obs}
		\end{figure}
	
		\begin{figure}[htb]
			\begin{center}
				\includegraphics[width=4in]{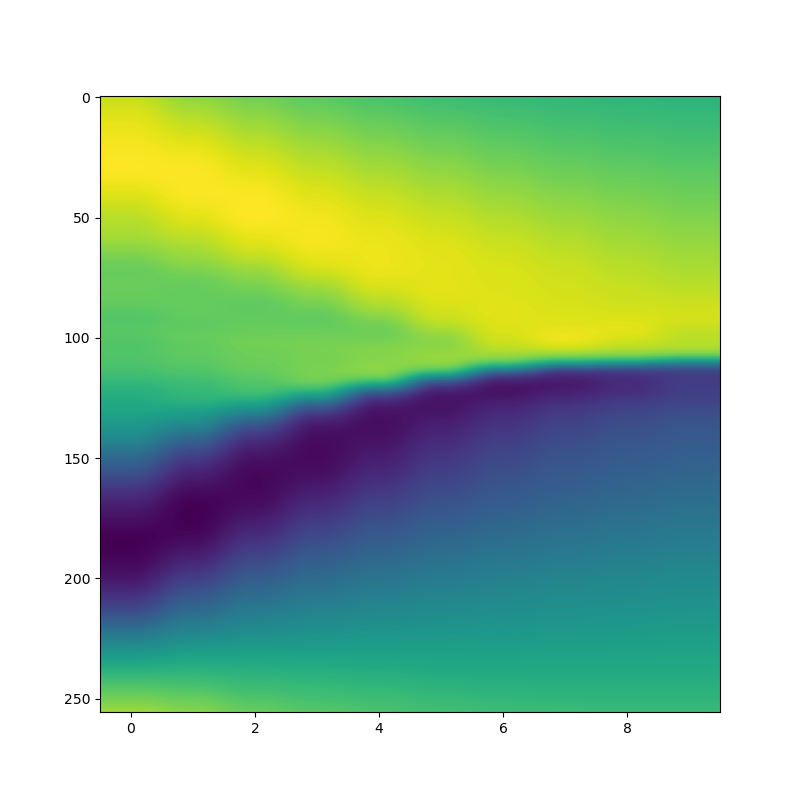}
			\end{center}
			\caption{Example of model's prediction for Burgers' dynamics where $\vec{x}$-axis is time and $\vec{y}$-axis represents space}
			\label{fig:pred}
		\end{figure}

		We notice that in the spirals tasks, FNO1D is initialized using more points ($5$ or $10$ intead of $2$ as in the Leray-Schauder neural operator). Therefore, FNO1D is given more points for initialization. For the Burgers' equation, FNO2D was seen to perform significantly worse than ANIE in \cite{ANIE}, and we have therefore compared only with the latter in the present article for a more concise treatment.

\end{document}